\DeclareSymbolFont{matha}{OML}{txmi}{m}{it}
\DeclareMathSymbol{\varv}{\mathord}{matha}{118}
\newtheorem{theorem}{Theorem}
\newtheorem{definition}{Definition}
\newtheorem{proposition}{Proposition}
\newtheorem{lemma}{Lemma}
\newcommand{\checkit}[1]{}
\newcommand{\tuple}[1]{\langle #1 \rangle}
 \renewcommand{\P}{\mathcal{P}}
 \newcommand{\T}{\mathcal{T}}
\newcommand{\U}{\mathcal{U}}
\newcommand{\prop}[1]{{\cal P}_{A}}
\newcommand{\thereals}{\ensuremath{\mathbb{R}}}
\newcommand{\counters}[0]{\textsc{Counters}\xspace}
\newcommand{\blockgrouping}[0]{\textsc{Blocks-grouping}\xspace}
\newcommand{\sailing}[0]{\textsc{Sailing}\xspace}
\newcommand{\drone}[0]{\textsc{Drone}\xspace}
\newcommand{\procurement}[0]{\textsc{Procurement}\xspace}
\newcommand{\cashpoint}[0]{\textsc{Cashpoint}\xspace}
\newcommand{\terraria}[0]{\textsc{Terraria}\xspace}
\title{Handling Infinite Domain Parameters in Planning Through Best-First Search with Delayed Partial Expansions
}
\author{
    Submission 7934
}
\author{
Ángel Aso-Mollar$^1$
\and
Diego Aineto$^2$\and
Enrico Scala$^2$\And
    Eva Onaindia$^1$\\
\affiliations
$^1$Valencian Research Institute for Artificial Intelligence (VRAIN), Universitat Politècnica de València\\
$^2$Università degli Studi di Brescia\\
\emails
\{aaso,onaindia\}@vrain.upv.es,
\{diego.ainetogarcia,enrico.scala\}@unibs.it
}
\begin{document}

\maketitle

\begin{abstract}
In automated planning, control parameters extend standard action representations through the introduction of continuous numeric decision variables. Existing state-of-the-art approaches have primarily handled control parameters as embedded constraints alongside other temporal and numeric restrictions, and thus have implicitly treated them as additional constraints rather than as decision points in the search space. In this paper, we propose an efficient alternative that explicitly handles control parameters as true decision points within a systematic search scheme. We develop a best-first, heuristic search algorithm that operates over infinite decision spaces defined by control parameters and prove a notion of completeness in the limit under certain conditions. Our algorithm leverages the concept of delayed partial expansion, where a state is not fully expanded but instead incrementally expands a subset of its successors. Our results demonstrate that this novel search algorithm is a competitive alternative to existing approaches for solving planning problems involving control parameters.\end{abstract}


\section{Introduction}

In classical planning, the number of instantiated actions is determined by the finite number of objects in the world. The first infinite-domain variables introduced in PDDL (Planning Domain Description Language) were the variable time and numeric variables (fluents) to represent numeric resources in PDDL2.1 \cite{Fox_2003}. The semantics of PDDL2.1, however, explicitly rule out number-valued arguments in actions in order to keep the logical state space finite. Years later, a proposal was made to extend PDDL2.1 to allow actions with continuous numeric parameters called \textit{control parameters}. 

Control parameters are intended to represent physical quantities that a planner might choose to make an action have a desired effect. They were introduced by the POPCORN planner \cite{SavasFLM16,Savas18}, and have been also recently incorporated into the temporal and numeric planner NextFLAP \cite{SAPENA2024122820}. In both planners, control parameters are implicitly handled using Linear Programming and Satisfying Modulo Theory solvers, respectively, by jointly formulating temporal and control parameter constraints. 
Other methods use Neural Networks to concretize an abstract plan, determining the control parameters of the actions and the feasibility of the input-output evaluations via optimization of the input \cite{Heesch2024ALA}. This latter approach circumvents search by handling control parameters as black boxes.

In this work, we aim to explore an approach that treats control parameters as explicit elements in the decision space rather than as constraints to be satisfied. We define a systematic search algorithm for planning problems with control parameters that (1) defines control parameters as numeric decision points; and (2) guarantees a notion of completeness. 
Our algorithm, Sampling Best-First Search (S-BFS), enables systematic search in structured infinite decision spaces. This is achieved through \textit{delayed partial expansions}, where nodes are partially expanded by sampling successors and re-expanding them if they appear promising in future evaluations.

This paper is structured as follows. In Section 2, we present the formalization of a planning problem with control parameters. 
Section 3 provides a detailed description of the S-BFS algorithm, while Section 4 introduces a notion of completeness under specific conditions, along with other relevant properties. Section 5 reviews related work on resolution strategies for problems that involve continuous numeric variables. Finally, Section 6 presents the experimental evaluation and we compare the performance of S-BFS to other approaches that manage numeric control  parameters.


\section{Problem Formalization} \label{background}

We adopt the numeric planning formalization from \cite{subgoaling20} and adapt it to our needs. We consider a set of numeric variables, $X$, and a set of Boolean variables, $F$, referred to as \textit{numeric state variables} and \textit{Boolean state variables}, respectively. We define $V = X \cup F$ as the union of these sets. A \textbf{valuation} $v : V \rightarrow Dom$ maps each variable $\lambda \in V$ to a value in its domain $Dom$, denoted as $v[\lambda]$. A \textbf{numeric expression} is a polynomial expression over $X$, $\text{Expr}(X)$ being the set of all numeric expressions. A \textbf{constraint} involves equalities $f = b$ for variables $f \in F$, where $b \in \mathbb{B}$ ($\mathbb{B}=\{\top,\bot\}$), and inequalities $\xi \bowtie 0$, such that $\bowtie\ \in \{<,\leq, =, \geq, >\}$, where $\xi \in \textup{Expr}(X)$, combined with logical operations $\land, \lor, \lnot$. The set of constraints over $V$ is defined as $\text{Constr}(V)$. \textbf{Propositional assignments}, defined as $\text{Assign}_P(F)$, consist of assignments $f := b$, where $f \in F$ and $b \in \mathbb{B}$, while \textbf{numeric assignments}, $\text{Assign}_N(X_1, X_2)$, are of the form $x := \xi$, where $x \in X_1$ and $\xi \in \text{Expr}(X_2)$.

We propose an alternative yet equivalent formulation to the control parameters approach proposed in \cite{Savas18}, as it offers a more convenient framework for the theoretical developments introduced later in this work. 
Specifically, we define a new set of numeric variables, referred to as \emph{control variables}, $U$, in addition to the existing set of numeric state variables, $X$. Control variables are unconstrained in their evolution and can assume any value within their (bounded) infinite domain, i.e., an interval. Rather than making decisions by directly evaluating a numeric parameter within an action, we evaluate fluents that will subsequently be used in actions. This subtle difference transforms the decision space into pairs of actions and values, but is fundamentally equivalent to the notion of control parameters, whose decision space is defined as actions with grounded numeric values.

In a numeric planning problem with control variables, the values of the state variables $X$ change after the execution of actions, based on the assignments explicitly defined in their effects, which now incorporate control variables: $\text{Assign}_N(X, X \cup U)$. For example, we can model increments $x := x + u$, such that $u$ takes any value on an associated interval, say $u \in [1,2]$. Moreover, control variables also appear in action preconditions. This definition removes the need to define control parameters explicitly for each action.

\begin{definition}
    A \textbf{numeric planning problem with control variables} is a tuple $\mathcal{P} = \langle F, X \cup U, A, I, G\rangle$, where:
    \begin{itemize}
        \item $F$ is a finite set of Boolean state variables;
        \item $X$ is a finite set of numeric state variables over $\mathbb{R}$;
        \item $U$ is a finite set of bounded numeric control variables over $[l,u]$, $l,u \in \mathbb{Z}$;
        \item $A$ is a {finite} set of actions $a = (\textup{Pre}(a),\textup{Eff}(a))$, where $\textup{Pre}(a) \in \textup{Constr}(X \cup F \cup U)$ and $\textup{Eff}(a) \subseteq Assign_P(F) \cup Assign_N(X,X \cup U)$;
        \item $s_0$ is a valuation over $X \cup F$, the initial state;
        \item $G \in \textup{Constr}(X \cup F)$ is the goal condition.
    \end{itemize}
\end{definition}

The value range of the control variables need not be continuous; it can be discretized according to a chosen precision, as discussed in \cite{aso-mollar2025a}.
Next, we characterize the semantics of a numeric planning problem with control variables as a transition system.

\begin{definition} 
    The semantics of a \textbf{numeric planning problem with control variables} $\mathcal{P}$ is the transition system defined as  $\mathcal{T}(\mathcal{P})=\langle S,\U,s_0,S_G, \rightarrow\rangle$ where:
    \begin{itemize}
        \item $S = \langle \mathbb{B}^{|F|} \times \mathbb{R}^{|X|} \rangle$ is the state space; a state $s \in S$ is a valuation over $X \cup F$;
        \item $\U = \langle [l_1,u_1] \times \ldots \times [l_n,u_n]\rangle $ is the control space; $\mu \in \U$ is a valuation over $U$;
        \item $s_0 \in S$ is the initial state described by $I$, $s_0=I[X\cup F]$;
        \item $S_G = \{ s \in S\ |\ s \models G\}$ are the goal states
        \item $\rightarrow$ is the transition relation. A transition $(s,\langle a,\mu\rangle,s') \in S \times A \times \U \times S$ belongs to $\rightarrow$ iff $(s,\mu) \models Pre(a)$ and, $\forall v\in X \cup F,$ 
        \[s'[v] = 
        \begin{cases}
            s[\xi] & if\ (v:= \xi) \in \textup{Eff}(a) \\
         s[v]& otherwise 
        \end{cases}
        \]
    \end{itemize}
\end{definition}

The decision space of a state $s$ is defined as $D(s):=\{\langle a,\mu\rangle \in A \times \U: \exists s'\in S \textup{ such that } (s,\langle a,\mu\rangle,s') \in \rightarrow\}$. This decision space is infinite, since $\mu$ takes values in an infinite set (an interval). A plan in this setting is no longer a sequence of actions, but a sequence of pairs action-valuation of control variables:

\begin{definition}\label{def:input-action-plan}
    Let $\mathcal{P}$ be a numeric planning problem with control variables and let $\mathcal{T}(\mathcal{P})=\langle S, \U, s_0, S_G, \rightarrow\rangle$ be its associated transition system. A \textbf{plan} $\pi = (\tuple{a_i,\mu_i})_{i=1}^n$ for $\mathcal{P}$ is a \textbf{solution} iff $(s_0, \langle a_0,\mu_0\rangle,s_1),\dots,(s_{n-1}, \langle a_n,\mu_n\rangle,s_n) \in\  \rightarrow$ and $s_n \in S_G$. We denote the set of all solutions of $\P$ as $\Pi_\P$. We will say that $\P$ is \textbf{solvable} if $\Pi_\P \neq \emptyset$.
\end{definition}


\section{Sampling Best-First Search}

We approach our problem using Best-First Search (BFS), a widely used strategy for finite decision spaces. BFS explores a decision tree, where states are nodes, and selects nodes for expansion based on a \textit{node evaluation criterion} (NEC), denoted as $f$; selection is managed through an open list, typically a priority queue ordered by $f$. This criterion is usually a heuristic function, $f = h$, or a combination of heuristic and accumulated cost, $f = g + h$. Once selected, a node is \textit{expanded} by generating its successors, linking them to the parent, and inserting them into the open list. BFS is considered a \textit{systematic search} strategy as it explores all choices. However, in the infinite decision space induced by control variables, a node may have infinitely many successors, preventing full expansion. As a result, standard BFS algorithms are not directly applicable to our problem. To overcome this limitation, we propose two modifications to the conventional BFS framework.

Firstly, since we cannot fully expand a node as it has infinitely many successors, we use \textbf{delayed partial expansions} to incrementally generate subsets of a state's successors via a sampling function $\phi$. 
\begin{definition}
    A \textbf{sampling function} $\phi$ for a problem $\P= \langle F, X \cup U, A, s_0, G\rangle$ with transition system $\T(\P) = \langle S,\U,s_0,S_G, \rightarrow\rangle$ is a function $\phi:S \rightarrow \bigcup_{s \in S} \textup{P}_d(D(s))$ that defines a probability density for every $s\in S$, with domain in its decision space $D(s)=\{\langle a,\mu\rangle \in A \times \U: \exists s'\in S \textup{ such that } (s,\langle a,\mu\rangle,s') \in \rightarrow\}$, where $\textup{P}_d$($X$) denotes the set of all probability density functions with domain $X$:
    \[\phi(s): D(s) \rightarrow [0,1]\]
\end{definition}

In other words, $\phi$ assigns a probability density function to every state $s$, and each density function $\phi(s)$ is defined over the decision space of $s$, $D(s)$.

Secondly, partially expanded states cannot be closed, meaning they will be re-added to the open list for potential (partial) expansion in subsequent iterations. However, their NEC value will be adjusted based on a \textbf{rectification function} $r_h$, which is defined based on a heuristic function $h$. 
Such functions are generalizations of heuristic functions that depend on the number of delayed partial expansions. 

\begin{definition}
    Given a problem $\P$ that defines the transition system $\T(\P) = \langle S,\U,s_0,S_G, \rightarrow\rangle$, a \textbf{ rectification function} $r_h$, where $h$ is a heuristic function $h:S \rightarrow \thereals$, is a function $r_h :\mathbb{Z}^+ \times S \rightarrow \mathbb{R}$ such that \[r_h(0,s)=h(s),\ \forall s \in S\]
\end{definition}

Using these two components, we define the \textbf{Sampling Best-First Search} (S-BFS) schema. S-BFS denotes a family of algorithms that integrate delayed partial expansions guided by a sampling function $\phi$ and NEC corrections determined by a rectification function $ r_h $. We will address specific instances of this algorithm as S-BFS$_{\phi,r_h}$. Concretely,  $f=r_h$ wil be referred as S-G$_{\phi,r_h}$, and $f=g+r_h$ as S-A$_{\phi,r_h}$.

Algorithm \ref{alg:sgbfs} outlines S-BFS$_{\phi,r_h}$. It starts by initializing a priority queue $Open$ with the initial state $s_0$ and its $f$-value (line 1). As long as the list is not empty (line 2), the state with the lowest NEC value is selected (line 3). If it is a goal state (line 4), the algorithm returns that the goal has been reached (line 5). If not (line 6), $s$ is partially expanded by sampling from the density function $\phi(s)$ and generating one successor $s'$ (line 7-8). It is inserted to $Open$ (line 9), and the original state $s$ is reinserted with its rectified value (line 10-11).

\begin{algorithm}[tb]
\caption{S-BFS$_{\phi,r_h}$ }
\label{alg:sgbfs}
\textbf{Input}: Transition system $\mathcal{T}(\mathcal{P})=\langle S,\U,s_0,S_G, \rightarrow\rangle$, sampling function $\phi$, rectification function $r_h$ and NEC $f$\\
\textbf{Output}: Goal reached
\begin{algorithmic}[1]
\STATE $Open:=\{(f(s_0),s_0)\}$
\WHILE{$Open \neq \emptyset $}
    \STATE Extract node $(f(s),s)$ with lowest $f$-value

    \IF{$s \in S_G$}
        \RETURN true
    \ELSE
        \STATE Sample $\langle a,\mu\rangle$ from $\phi(s)$
        \STATE Apply $\langle a,\mu \rangle$ to $s$ and generate $s'$
        \STATE Insert $(f(s'),s')$ in $Open$ 
        \STATE Rectify $f(s)$ using $r_h$
        \STATE Insert $(f(s),s)$ in $Open$
    \ENDIF
\ENDWHILE
\RETURN false
\end{algorithmic}
\end{algorithm}

While the algorithm is designed for infinite decision spaces where the probability of re-sampling a successor is zero, it can be seamlessly extended to hybrid scenarios in which actions with control variables and actions without control variables are interleaved, by checking if the sampled node has already been visited.


\section{Properties of S-BFS Algorithms}

In this section, we use a relaxed notion of completeness that works for infinite decision spaces: probabilistic completeness. Then, we outline the essential properties that the sampling function $\phi$ and the rectification function $r_h$ must satisfy to ensure the probabilistic completeness of S-BFS. After that, we demonstrate the probabilistic completeness of S-BFS given those function properties.

\subsection{On the Completeness of Algorithms for Infinite Decision Spaces}

Since the decision space of a numeric planning problem with control variables is infinite, we need to define a specialized notion of completeness for systematic search algorithms that applies to infinite decision spaces. One might question how an algorithm with a finite number of steps can solve problems involving an infinite number of successors. To address this, we introduce the concept of probabilistic completeness, as defined in \cite{Valenzano_Xie_2016}. This definition is more relaxed than the traditional definition of completeness, since it only requires that an algorithm finds a solution in the limit with probability 1:

\begin{definition}
     A search algorithm is \textbf{probabilistically complete} if, for every solvable problem $\P$, i.e., for which the set of all solutions $\Pi_\P=\{\pi\ : \pi \textup{ is a solution for } \P\}$ is not empty, the probability of finding a solution $\pi\in \Pi_\P$ in $n$ steps is 1 when $n \rightarrow \infty$.
 \end{definition}

\subsection{Properties of S-BFS}

First of all, we need to define the support of an arbitrary function:

\begin{definition}
    The \textbf{support} of any function $f: X \rightarrow \mathbb{R}$ is defined as the set of points where $f$ is strictly positive.
    \[supp(f) := \{x \in X :\ f(x)>0\}\]
\end{definition}
    
With this, we can define our distributions of interest. We are interested in sampling functions $\phi$ that define probability densities $\phi(s),\ \forall s \in S$,  such that $\phi(s)$ has \textbf{support within its whole domain} $D(s)$. This is needed because, when sampling a new successor state, any possible successor should have a non-negative probability of being sampled. An example of this could be the uniform sampling function, $\phi_u$, defined such that $\phi(s)$ is a uniform distribution $U(D(s))$ for every $s\in S$.

Next, we must define a subset of rectification functions that possess a crucial property. This property ensures that re-inserted nodes do not dominate the search, thereby preventing infinite loops.

\begin{definition}
    Given a rectification function $r_h :\mathbb{Z}^+ \times S \rightarrow \mathbb{R}$, where $h$ is a heuristic function $h: S \rightarrow \thereals$, $r_h$ is a \textbf{proper} rectification function iff $ \exists n_0 \in \mathbb{Z}^+$ such that, after $n_0$,
    $r_h$ is monotonically increasing with respect to the first variable
    \[
    r_h(n_1,s) < r_h(n_2,s),\ \forall n_1,n_2\ :\ n_0 \leq n_1 < n_2
    \]
\end{definition}

Given a heuristic function $h$, the rectification function $r_h(n,s)=h(s)+n$ is dominated by $h$. This property of rectification functions allows distinct  balance of exploration and exploitation in the algorithm, while ensuring its probabilistic completeness. With these two considerations, we can now prove the probabilistic completeness of S-BFS.

\begin{theorem}
\label{theorem1}
Let $\mathcal{P} = \langle F, X \cup U, A, s_0, G\rangle$ be a numeric planning problem with control variables that induces the transition system $\mathcal{T}(\mathcal{P})=\langle S, \U, s_0, S_G, \rightarrow\rangle$. Let $\phi$ be a sampling function such that $supp(\phi(s))=D(s)\ \forall s\in S$ and let $r_h$ be a proper rectification function, where $h$ is a heuristic function. Then, the probability of S-BFS$_{\phi,r_h}$ finding a solution $\pi\in\Pi_\P$ at step $n$, if $\Pi_\P \neq \emptyset$, is 1 as $n \to \infty$. \end{theorem}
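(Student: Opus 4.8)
The plan is to show that, with probability 1, S-BFS$_{\phi,r_h}$ eventually traces out an entire solution path $s_0 \to s_1 \to \dots \to s_n$ for some fixed $\pi = (\langle a_i,\mu_i\rangle)_{i=1}^n \in \Pi_\P$. Fix such a solution. The key structural observation is that, because $r_h$ is proper, every node in the search can be (partially) expanded only finitely many times before its rectified $f$-value exceeds any finite threshold; consequently, over an infinite run, any node that stays in $Open$ forever must in fact be selected for expansion infinitely often, and therefore sampled from infinitely often. I would make this precise with the following steps.

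\textbf{Step 1 (Bounding re-expansions via properness).} First I would argue that no single node can monopolize the search. Using the proper-rectification property, after $n_0$ delayed partial expansions of a node $s$, its NEC $f(s)$ (either $r_h(n,s)$ or $g(s)+r_h(n,s)$) is strictly increasing in the expansion count $n$ and hence $\to\infty$. So for any constant $M$, each node is selected with $f$-value $\le M$ only finitely often. Since at every iteration the algorithm selects the minimum-$f$ node and (in the non-goal case) always produces exactly one new successor whose $f$-value is finite, I would conclude that the sequence of selected $f$-values is nondecreasing in the long run up to the finitely many small-count exceptions, and in particular that if the algorithm never terminates, then every node ever generated is selected infinitely often (otherwise it would sit in $Open$ with a fixed finite key while the algorithm extracts strictly larger keys forever, a contradiction with always picking the minimum).

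\textbf{Step 2 (Infinitely many sampling attempts along the solution path).} Given Step 1, condition inductively on the event that $s_0,\dots,s_k$ have all been generated. Node $s_k$ is then selected for expansion infinitely often; each time, the algorithm draws $\langle a,\mu\rangle \sim \phi(s_k)$. Since $\langle a_{k+1},\mu_{k+1}\rangle \in D(s_k)$ and $supp(\phi(s_k)) = D(s_k)$, every such draw hits an arbitrarily small neighborhood of $\langle a_{k+1},\mu_{k+1}\rangle$ with positive probability — and, crucially, because $\mu_{k+1}$ lies in the support, the transition it induces produces $s_{k+1}$; since the successor map is deterministic given $(s_k,\langle a,\mu\rangle)$, sampling exactly (or, in the discretized reading of the domain, sufficiently close to) $\langle a_{k+1},\mu_{k+1}\rangle$ regenerates $s_{k+1}$. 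With infinitely many independent attempts, each of positive probability $p_k>0$, the Borel–Cantelli argument (or simply $1-(1-p_k)^m \to 1$) gives that $s_{k+1}$ is generated with probability 1.

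\textbf{Step 3 (Chaining and termination).} Iterating Step 2 for $k=0,1,\dots,n-1$, with probability 1 all of $s_1,\dots,s_n$ are eventually generated; each of these events has probability 1, so their intersection does too. Once $s_n \in S_G$ is inserted into $Open$, either the algorithm has already returned true earlier, or $s_n$ is eventually extracted — by Step 1 it cannot be starved — and the goal test on line 4 fires, so the algorithm returns true in finitely many steps. Hence the probability that S-BFS$_{\phi,r_h}$ terminates successfully within $n$ steps tends to $1$ as $n\to\infty$, which is exactly probabilistic completeness.

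\textbf{Main obstacle.} The delicate point is Step 1: ruling out that the search gets trapped forever re-expanding a bounded pool of nodes whose rectified values never rise high enough, or — in the opposite direction — that newly generated successors keep arriving with smaller $f$-values than the solution-path node we are waiting on, so that $s_k$ is selected only finitely often. Properness of $r_h$ handles the first concern (values on any fixed node diverge), but the second requires care, because infinitely many distinct successors could in principle each have small heuristic value. The clean way I would resolve this is to note that each iteration adds at most one node, the extracted key is the current minimum, and a proper $r_h$ forces the key of any node that remains unexpanded-enough to eventually dominate; a counting/pigeonhole argument on how many nodes can have $f$-value below a given bound (finitely many, since each contributes boundedly many low-count selections before its key escapes) then shows the minimum key $\to\infty$ on any non-terminating run, so no generated node — in particular none on the solution path — can be selected only finitely often. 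This is the step where the statement's hypotheses are genuinely used, and where I would spend the most care in the full write-up.
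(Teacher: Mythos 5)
Your proposal follows essentially the same route as the paper's proof: the two ingredients are identical --- a Borel--Cantelli/geometric-series argument using the strictly positive sampling probability guaranteed by $supp(\phi(s))=D(s)$, and the properness of $r_h$ to ensure no node is starved in the priority queue --- with the only real difference being presentational, in that you chain these facts along a fixed solution path whereas the paper argues that every successor is eventually sampled and every queued state eventually selected, hence every state reachable. The subtlety you flag in Step 1 (fresh successors repeatedly arriving with small $f$-values) is exactly the point the paper settles with its brief claim that only a finite number of states with smaller NEC values precede a waiting node, so your write-up is, if anything, more explicit than the paper about where that part of the argument needs care.
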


\begin{proof}
There are two aspects to prove for this theorem. The first is that (1) any state in the search space can be sampled and inserted into the priority queue. The second is that (2) the priority queue ensures every state is eventually selected. Once these two conditions have been proven, it can be stated that (3) the S-BFS$_{\phi,r_h}$ algorithm will be able to reach a solution, if any, since every state will eventually be visited.

(1) First, we have to prove that every successor state will eventually be sampled to enter the priority queue. Since $supp(\phi(s))=D(s),\ \forall s \in S$, every successor state has a strictly positive probability of being sampled. Let us consider an arbitrary current state $s\in S$; we define $P_{\phi(s)}(s'|s) = p > 0$ as the probability of sampling a successor state $s'$ (i.e., $(s,\langle a,\mu\rangle, s') \in\ \rightarrow$ for some $\langle a,\mu\rangle \in D(s)$) given state $s$. This probability is strictly positive, as stated before, because $supp(\phi(s)) = D(s)\ \forall s\in S$. Let us define the following probabilistic event:
\[A_k = \{\textup{Given }s,\textup{ the state }s'\textup{ is not sampled after }k\textup{ trials}\}\]
The probability of the event $A_k$ can be determined using $p$:
\[P_{\phi(s)}(A_k)=(1-p)^k\]
Since $p>0$, $|1-p|<1$. We can characterize the infinite sum of $P_{\phi(s)}(A_k)$, which turns to be a geometric series
\[\sum_{k=1}^\infty P_{\phi(s)}(A_k) = \sum_{k=1}^\infty (1-p)^k = \frac{1}{1-(1-p)} = \frac{1}{p} < \infty\]
And if we apply the Borel-Cantelli Lemma: 
\begin{lemma}
    (Borel-Cantelli Lemma). Let $P$ be a probability distribution, and let $\{A_k\}$ be a sequence of probabilistic events such that $\sum_{k=1}^\infty P(A_k)<\infty$, then $P(\limsup_{k\rightarrow \infty}A_k)=0.$
\end{lemma}
We prove that $P_{\phi(s)}(\limsup_{k\rightarrow \infty}A_k)=0$, i.e., the probability of never sampling an arbitrary successor state is zero, so every successor will eventually be sampled, q.e.d.

(2) Second, we have to prove that every state inside the priority queue will eventually be first in priority. Since $r_h$ is a proper rectification function, there exists a $n_0$ that, for all $n \geq n_0$, $r_h$ is monotonically increasing with respect to its first variable. This condition ensures that no state re-inserted into the queue will dominate in priority indefinitely.
Let $s$ be a state that has not yet been expanded, with NEC $f(0,s)=g(s)+r_h(0,s)=g(s)+h(s)$. If it currently has the minimum NEC in the priority list, it will be selected. If not, other finite number of states with smaller NEC values will be selected; for every such state, its NEC value will eventually increase and surpass $g(s)+h(s)$. This process ensures that no state can indefinitely block another state from being selected. Consequently, every state in the priority queue will eventually become the first in priority, q.e.d.

(3) Finally, for a problem $\P$, S-BFS$_{\phi,r_h}$ will start from the initial state and will eventually be able to reach every state in the transition system, as it is ensured that every successor state can be sampled with probability $1$ in the limit, and that every sampled successor can be selected from the priority queue, preventing the algorithm from entering infinite loops. Thus, it will be able to reach every possible transition in $\rightarrow$ and eventually reach a final state $s\in S_G$, if $\Pi_\P\neq \emptyset$.
\end{proof}

The convergence of S-G algorithms can be expected to be faster than that of S-A algorithms. However, S-A algorithms are expected to yield higher-quality solutions since they include information about the current cost. In the following, we characterize a theoretical property of the search in S-BFS algorithms that will eventually let us bound the quality of the solution for S-A algorithms.

\begin{proposition}
\label{prop1}
    Let $\phi$ be a sampling function and $r_h$ a rectification function, both satisfying the conditions stated in Theorem \ref{theorem1}. For a given node $s$ in the current search tree, S-BFS$_{\phi,r_h}$ ensures that the $f$-value of every non-leaf node in the current subtree rooted at $s$ is bounded by $f(s)$.
\end{proposition}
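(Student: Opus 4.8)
The plan is to prove the inequality $f(t)\le f(s)$ directly for each non-leaf node $t$ in the subtree rooted at $s$, relying on two structural facts about S-BFS$_{\phi,r_h}$ that are already implicit in the proof of Theorem~\ref{theorem1}. The first is the \emph{min-selection property}: a node is expanded only at an iteration in which it attains the minimum $f$-value in the open list. The second is \emph{monotonicity} of a node's stored value over time: each re-insertion following a delayed partial expansion rectifies the value upward, which for a proper $r_h$ holds beyond the threshold $n_0$ and, for the canonical choice $r_h(n,\cdot)=h(\cdot)+n$, holds for every $n$. I would also record the preliminary remark that in S-BFS partially expanded states are never closed, so once a node enters the open list it stays there; in particular every ancestor of $t$, and thus $s$ itself, is present in the open list throughout the run after its own creation.

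First I would fix an arbitrary non-leaf node $t$ in the subtree rooted at $s$ (the case $t=s$ being trivial) and let $\tau$ be the time of its most recent expansion, which exists because $t$, being non-leaf, has been expanded at least once. Since $t$ is a descendant of $s$, the node $s$ was created before $t$, hence before $\tau$, and by the preliminary remark $s$ lies in the open list at time $\tau$. Applying the min-selection property to the iteration at $\tau$ gives $f_\tau(t)\le f_\tau(s)$, where $f_\tau$ denotes stored values at that time. Monotonicity then transfers the right-hand side to the present, $f_\tau(s)\le f(s)$, because $s$'s value can only have been rectified upward (or left unchanged) between $\tau$ and the current iteration. Chaining the two inequalities yields $f_\tau(t)\le f(s)$, and because $s$ is compared to $t$ directly at time $\tau$, no separate induction along the path from $s$ to $t$ is needed.

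The remaining step, which I expect to be the main obstacle, is to bridge $f_\tau(t)$ — the value with which $t$ was \emph{selected} at its last expansion — to $t$'s \emph{current} stored value $f(t)$; the two differ by exactly the final rectification increment $r_h(n_t,t)-r_h(n_t-1,t)$ applied at the end of the iteration at $\tau$. The hard part is showing this increment cannot push $f(t)$ past $f(s)$, and this is precisely where the margin in the min-selection and the structure of a proper $r_h$ must be used together: when $t$ is selected the inequality $f_\tau(t)\le f_\tau(s)$ leaves slack that, for the canonical $r_h(n,\cdot)=h(\cdot)+n$ over integer-valued costs and with strict selection (or a suitable tie-breaking convention), is at least one increment, so that $f(t)=f_\tau(t)+1\le f_\tau(s)\le f(s)$. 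I would therefore isolate this quantitative claim — the last rectification increment is bounded by the selection margin — as the single lemma carrying the weight of the proof and discharge it from the definition of a proper rectification function, since all of the subtree bookkeeping has already reduced to this one per-node comparison.
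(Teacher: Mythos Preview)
Your route is genuinely different from the paper's: you argue directly --- fix a non-leaf descendant $t$, take the time $\tau$ of its last expansion, and chain min-selection ($f_\tau(t)\le f_\tau(s)$) with monotonicity of stored values ($f_\tau(s)\le f(s)$) --- whereas the paper runs an induction on the number of nodes in the subtree, verifying case-by-case that the invariant survives each individual expansion step. Your reduction to the single inequality $f_\tau(t)\le f(s)$ is clean and correct, and it makes explicit the two mechanisms (selection by minimum, upward rectification) that the paper's case analysis uses only implicitly.

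The trouble is the final bridge from $f_\tau(t)$ to the current $f(t)$. The lemma you isolate --- that the last rectification increment $r_h(n_t,t)-r_h(n_t-1,t)$ is bounded by the selection margin $f_\tau(s)-f_\tau(t)$ --- is false in this setting: control variables are continuous, so $f$-values are real, and the margin can be made arbitrarily small (take $f_\tau(t)=f_\tau(s)-\varepsilon$) while the increment is fixed at, say, $+1$ for the canonical $r_h(n,\cdot)=h(\cdot)+n$. Your escape hatch of ``integer-valued costs and strict selection'' is simply not available here, and nothing in the definition of a proper rectification function supplies such a bound. It is worth noting that the paper's inductive proof, in its cases (2a)/(3a), records $f(s')\le f(s)$ at the moment of selection and never revisits $f(s')$ after the rectification that immediately follows; so it elides rather than resolves the very difficulty you have pinpointed. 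What your argument does establish rigorously is that the value of each non-leaf node \emph{at its last selection} is bounded by the current $f(s)$, and that weaker statement already suffices for the only downstream use (Theorem~2, where the goal node $s_G$ is selected but never rectified).
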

\begin{figure}
    \centering
    \includegraphics[width=\linewidth]{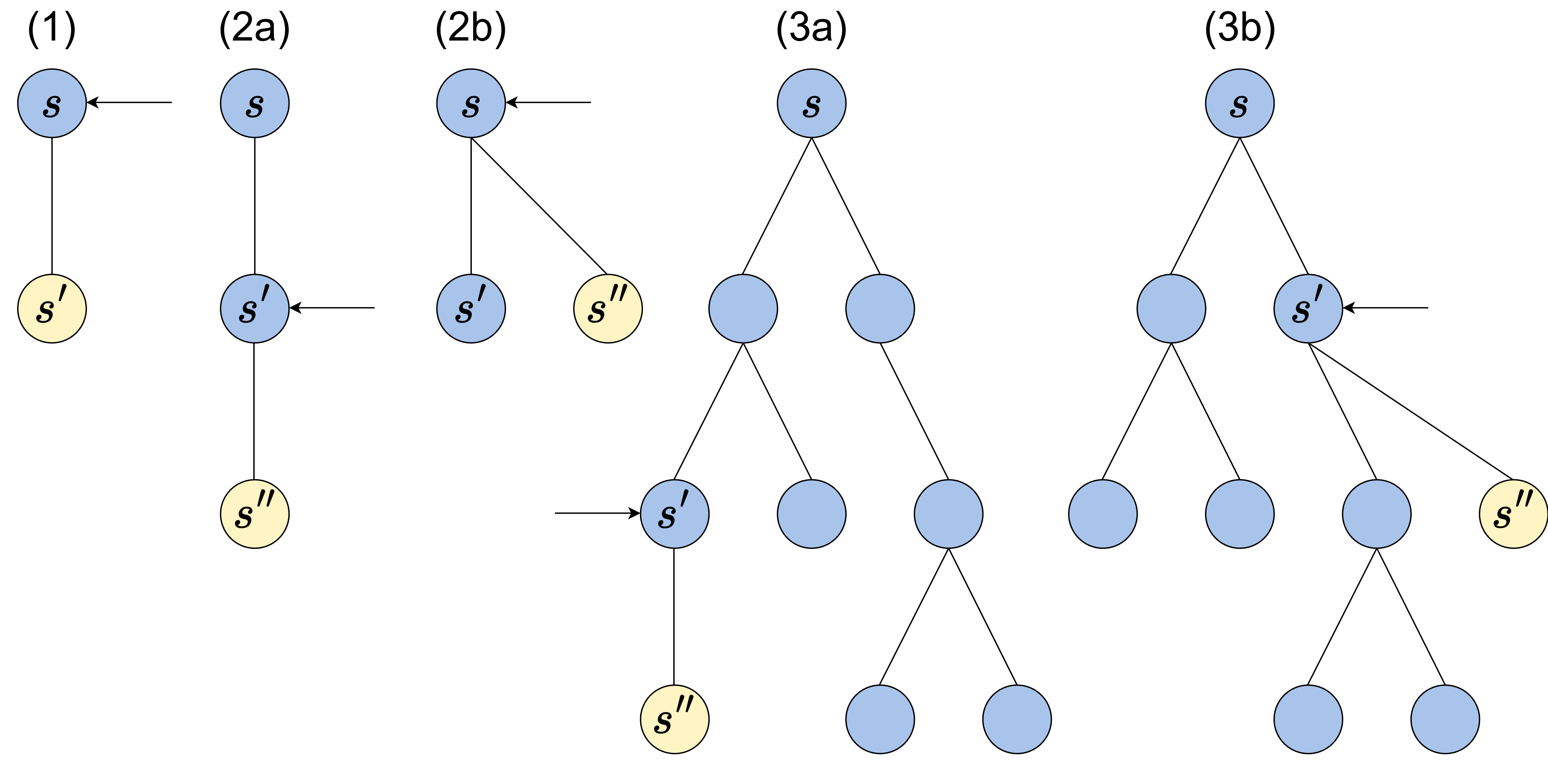}
    \caption{Figure to support the demonstration of Proposition \ref{prop1}. Existing nodes are marked blue, sampled nodes are marked yellow and selected nodes are marked with an arrow.}
    \label{fig:prop1}
\end{figure}

\begin{proof}
This property will be proven by induction, specifying the number of nodes in the subtree as \textbf{N}. In Figure \ref{fig:prop1}, we present illustrative diagrams to exemplify the demonstration.

\begin{enumerate}
    \item \textbf{Base case N=1}. If $s$ is a leaf node, then when a successor $s'$ of $s$ is sampled it will produce a subtree with only one leaf node $s'$. See Figure \ref{fig:prop1}-(1).
    \item \textbf{Base case N=2}. If $s$ is a node whose subtree consists of only a leaf successor $s'$, then:
    \begin{enumerate}
    \item If $s'$ is selected, it is because $f(s')\leq f(s)$. In this case, a new node $s''$ will be sampled, which will become a leaf node, and $s'$ will become a non-leaf node such that $f(s')\leq f(s)$. See Figure \ref{fig:prop1}-(2a).
    \item If $s$ is selected instead, it is because $f(s') > f(s)$, and then a new leaf $s''$ will be generated, resulting in a subtree with two leaves, $s'$ and $s''$, as in (1). See Figure \ref{fig:prop1}-(2b).
    \end{enumerate}
    \item \textbf{Inductive step}. If $s$ is a node such that the $f$-value of every non-leaf node in the current subtree rooted at $s$ is bounded by $f(s)$, then:
    
    \begin{enumerate}
        \item If a leaf node $s'$ is selected, it is because $f(s')\leq f(s)$. In this case, a new node $s''$ will be sampled, which will become a leaf node, and $s'$ will become a non-leaf node such that $f(s')\leq f(s)$, as in (2a). See Figure \ref{fig:prop1}-(3a).
        \item If a non-leaf node $s'$ is selected instead, whose $f$-value already guarantees the desired property, it will only produce a new leaf node as in (2b). See Figure \ref{fig:prop1}-(3b).
    \end{enumerate}
\end{enumerate}

By induction, the $f$-value of every non-leaf node in the subtree rooted at $s$ is bounded by the current $f$-value of $s$.

\end{proof}

With this property we can guarantee an optimality bound for S-A algorithms. When a solution is found, its cost will be bounded by the current $f$-value of the initial state.

\begin{theorem}
    Let $\mathcal{P} = \langle F, X \cup U, A, s_0, G\rangle$ be a numeric planning problem with control variables that induces the transition system $\mathcal{T}(\mathcal{P})=\langle S, \U, s_0, S_G, \rightarrow\rangle$. Let $\phi$ be a sampling function such that $supp(\phi(s))=D(s)\ \forall s \in S$ and let $r_h$ be a proper rectification function, where $h$ is a heuristic function \textbf{such that } $h(s_G)=0\ \forall s_G\in S_G$, i.e., $h$ is a goal-aware heuristic. Whenever S-A$_{\phi,r_h}$ finds a solution $\pi \in \Pi_\P$, its cost is bounded by $r_h(n,s_0)$, where $n \in \mathbb{Z}^+$ is the number of times $s_0$ has been re-expanded.
\end{theorem}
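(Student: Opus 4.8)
The plan is to combine Proposition~\ref{prop1} with the goal-awareness of $h$ and the termination condition of Algorithm~\ref{alg:sgbfs}. Consider the execution of S-A$_{\phi,r_h}$ at the moment a solution $\pi\in\Pi_\P$ is returned. By line~3--5 of Algorithm~\ref{alg:sgbfs}, the state extracted from $Open$ at that iteration is some goal state $s_G\in S_G$, and since S-A uses $f=g+r_h$, the value with which $s_G$ was ordered in the queue is $f(s_G)=g(s_G)+r_h(m,s_G)$ for the relevant number of re-expansions $m$. Because $h$ is goal-aware, $h(s_G)=0$, and by the definition of a rectification function $r_h(0,s_G)=h(s_G)=0$; I will argue that the value stored for a goal node in $Open$ is exactly $g(s_G)$, i.e.\ $m=0$, since a goal node is never partially expanded (the algorithm returns before reaching the expansion branch), so it can only ever have been inserted with its base value. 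Moreover, $g(s_G)$ equals the cost of the path $\pi$ by which S-A reached $s_G$, so bounding $g(s_G)$ bounds $\text{cost}(\pi)$.

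Next I would invoke Proposition~\ref{prop1} with $s=s_0$: since $s_0$ is the root of the entire search tree, every non-leaf node in the current tree has $f$-value bounded by the current $f$-value of $s_0$, which is $r_h(n,s_0)$, where $n$ is the number of times $s_0$ has been re-expanded (here $g(s_0)=0$, so $f(s_0)=r_h(n,s_0)$). The remaining step is to connect the goal node $s_G$ to this bound. At the iteration in which $s_G$ is extracted and the solution returned, $s_G$ is the node with the lowest $f$-value in $Open$; in particular its $f$-value is no larger than the $f$-value of $s_0$ if $s_0$ is also in $Open$ at that time. If $s_0$ is currently in $Open$ this already gives $g(s_G)=f(s_G)\le f(s_0)=r_h(n,s_0)$. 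If $s_0$ is not in $Open$ at that moment, then $s_G$ lies in the subtree rooted at $s_0$ but is not yet a leaf of the final plan's ancestry in a way that matters: I would instead observe that $s_G$, being extracted for expansion (before the goal test short-circuits), is treated exactly as any selected node, and along the path from $s_0$ to $s_G$ the last such selection implies $f(s_G)\le f(\text{parent})\le\dots\le f(s_0)$ by the same monotone-selection argument underlying Proposition~\ref{prop1}; more cleanly, $s_G$ is a leaf of the current tree at selection time, and Proposition~\ref{prop1} applied to $s_0$ together with the fact that $s_G$ was selected over whatever non-leaf was available forces $f(s_G)\le f(s_0)$.

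Assembling the chain: $\text{cost}(\pi)=g(s_G)=f(s_G)\le f(s_0)=g(s_0)+r_h(n,s_0)=r_h(n,s_0)$, which is the claimed bound.

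The main obstacle I anticipate is the careful bookkeeping around which $f$-value is ``current'' for $s_0$ and for $s_G$ at the precise iteration of termination --- in particular justifying that a goal node is stored in $Open$ with value $g(s_G)$ (no spurious rectification), and handling the case where $s_0$ has itself been removed from $Open$ (selected and partially expanded) so that the comparison ``$s_G$ has minimum $f$-value'' does not literally mention $s_0$. This is resolved by leaning on Proposition~\ref{prop1}'s invariant, which bounds the $f$-value of \emph{every} non-leaf node of the tree rooted at $s_0$ by $f(s_0)$, and noting that the selected goal node is, at the instant of its selection, a leaf that was preferred over the available non-leaf frontier; the rest is routine.
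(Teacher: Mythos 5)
Your proposal is correct and follows essentially the same route as the paper's proof: bound $f(s_G)\le f(s_0)$ via Proposition~\ref{prop1} (equivalently, the minimality of the extracted goal node over the frontier containing $s_0$), then use goal-awareness to get $f(s_G)=g(s_G)$ and $g(s_0)=0$ to conclude $g(s_G)\le r_h(n,s_0)$. Your extra bookkeeping (the goal node is never rectified, and $s_0$ is always re-inserted into $Open$) only makes explicit details the paper leaves implicit.
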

\begin{proof}
    If a solution $\pi$ is found, it means that a state $s_G\in S_G$ is part of the (sub)tree rooted at $s_0$. That means that $f(s_G)\leq f(s_0)$. But $f(s_G)=g(s_G)+r_h(0,s_G)=g(s_G)+h(s_G)$, and since $s_G$ is a goal state, $h(s_G)=0$. Thus: 
    \[g(s_G)\leq f(s_0)=g(s_0)+r_h(n,s_0)=r_h(n,s_0)\]
    Since $s_0$ is the initial state, its cost is zero. With this, it has been proved that the cost of the solution found $g(s_G)$ is bounded by $r_h(n,s_0)$.
\end{proof}

The outcome of this theorem enables us to ensure a certain degree of solution quality through the thoughtful definition of the rectification functions. However, it does not offer guarantees of optimality, as the rectification functions must be defined to increase without bound in the limit. Nonetheless, in the absence of optimality guarantees, the rate of growth can be adjusted to any desired pace, thereby controlling the number of re-expansions and, consequently, the extent of exploration at each depth level before progressing to the next.


\section{Related Work}

In this section, we describe different approaches that involve or make use of continuous numeric variables or control parameters. We present the related work on resolution strategies prior to the experimentation of Section \ref{expereiments} to help understand the comparative evaluation in this section.

\paragraph{Existing planners.} As discussed in the introduction, the planners POPCORN and NextFLAP handle control parameters implicitly alongside the temporal and numeric planning constraints. These two planners address the infinite decision space caused by the continuous numeric variables through discrete forward Partial Order Planning search combined with a constraint-based formulation. Control parameters remain lifted during the search, constrained within an interval, and are periodically optimized. POPCORN inherits the use of linear programming from the COLIN planner \cite{ColesCFL2012} to optimize the control variables while NextFLAP uses Satisfiability Modulo Theories (SMT). Both linear programming and SMT solutions aim to obtain strong variable bounds, thus narrowing down the decision space delimited by the control parameters. Therefore, these planners do not treat control parameters as decision points but as constraints inherent to the problem. A former planning framework, TM-LPSAT \cite{ShinD2005}, was able to handle actions with real or interval-valued parameters among other features like exogenous events, processes, reusable metric or interval resources, in an extended version of the PDDL+ language \cite{Fox_2006}. TM-LPSTAT uses a SAT-based arithmetic constraint solver to find a solution to the system constraints and proves it is sound and complete for a subset of the aforementioned features. TM-LPSAT contributes with the theoretical grounds to demonstrate that a SAT-based planning framework can be extended to deal with problems involving continuous change to numeric quantities.

\paragraph{Hybrid systems.} Continuous numeric variables have also been used in hybrid systems that involve a mixed discrete-continuous formulation. In this setting, actions are commonly hybrid with bounded continuous control variables to represent the dynamics and evolution of the system state over time while decision variables are still discrete. In hybrid planning, the parameters that control the system are intended to represent the margins of control of the processes during execution, and they are typically interpreted as rates of change \cite{Fernandez-Gonzalez15,konming08}. This concept aligns with the \textit{processes} formalized in PDDL+ \cite{Fox_2006} and has recently been exploited to explain the behavior of hybrid systems \cite{Aineto2022ExplainingTB}. This line of work falls outside the scope of this paper, which focuses on efficiently handling continuous parameters. 

\paragraph{Rapidly-Exploring Random Trees.} 
When working with control parameters, one may consider continuous-space search algorithms, such as Rapidly-Exploring Random Trees (RRT) \cite{LaValle1998RapidlyexploringRT}, which are well-suited for navigating infinite decision spaces. Although RRT has proven effective in motion and path planning problems \cite{rrtrecentadv}, it is less suitable for structured action spaces that require applicability analysis. The application of RRT in Automated Planning is sparsely explored in the literature. To the best of our knowledge, the only framework that addresses this challenge is RRT-Plan \cite{vidal21}, though it is limited to propositional planning.  

\paragraph{Trial-Based Heuristic Tree Search.} In the literature, a reformulation of the popular UCT algorithm \cite{KocsisS06} of Monte-Carlo Tree Search (MCTS) \cite{Browne12} for classical planning problems has been explored under the framework of Trial-Based Heuristic Search (THTS) \cite{thts}. This approach uses a heuristic function that resembles the Monte-Carlo sampling rollout of the simulation phase of MCTS and allows to model any MCTS algorithm within the framework. 
A recent investigation highlights that the action selection method UCB1 of UCT is inadequate for solving classical planning problems in THTS, and addresses this limitation by proposing an alternative criterion called  UCB1-Normal2 \cite{masataro24}. We will, however, use a standard UCT with UCB1 criteria for our experimental evaluation, since the assumptions underpinning UCB1-Normal2 do not hold for infinite decision spaces, e.g., a very high likelihood relative to the variance estimator of the successors' heuristic values.

\paragraph{Search with partial expansions.} Some algorithms such as Partial Expansion A* (PEA*) \cite{pea} handle search with partial expansions for problems with large branching factors. While the branching factor in PEA* is large, it is finite, whereas we cannot generate the infinite successors of a node to determine which ones to discard. Instead, we rely on the sampling strategy to generate successors iteratively, which presents an additional challenge as we do not have access to the $f$-values of the non-generated children. The idea of Enhanced PEA \cite{epea}, which generates only nodes that satisfy a condition on their $f$-value, is an interesting direction if it could be adapted to work with domain-specific operator information in our sampling process.

\section{Experiments}
\label{expereiments}

We aim to compare our approach with existing alternative techniques that either keep decision points implicit using constraints or rely solely on local search.

\paragraph{Baselines.}
We have chosen to compare our approach with  NextFLAP planner \cite{SAPENA2024122820}, as, unfortunately, POPCORN has been confirmed to be in a non-compilable state after consultation with its authors. Internally, NextFLAP discretizes the control parameter values and performs a forward POP search. Although this results in a discretized problem which is not truly infinite, it remains the closest we can achieve using a functional planner that works for control parameters. To also compare with a search method that works for true infinite spaces, we will use a Monte-Carlo Tree Search algorithm with UCB1. Specifically, we will employ its Progressive Widening version, capable of searching in infinite decision spaces \cite{mctsPW}.

\paragraph{Domains.} 
We utilize the domains introduced in POPCORN: \cashpoint (\textcolor{Green}{$\bullet$}), \procurement (\textcolor{Mulberry}{$\bullet$}), and \terraria (\textcolor{black}{$\bullet$}), along with four domains that are extensions of domains from the latest numeric IPC\footnote{https://github.com/ipc2023-numeric}: \counters (\textcolor{red}{$\bullet$}), \blockgrouping (\textcolor{blue}{$\bullet$}), \drone (\textcolor{Aquamarine}{$\bullet$}), and \sailing (\textcolor{olive}{$\bullet$}). These extensions have been implemented  to allow for continuous increments/decrements of numeric variables and can be found in the supplementary material.

\paragraph{Algorithm instances.}
We aim to analyze multiple instances of the S-BFS algorithm to analyze the impact of different rectification functions and sampling methods. We will focus on additive rectification functions, i.e., $r_h(n,s) = h(s) + r(n)$, as they will allow us to clearly exemplify various types of growth that we are interested in analyzing: \textbf{linear} ($r_{\textup{lin}}(n)=n$), \textbf{quadratic} ($r_{\textup{qua}}(n)=n^2$) and \textbf{logarithmic} ($r_{\textup{log}}(n)=\log(1+n)$).

We will also explore several sampling strategies:

\begin{itemize}
    \item \textbf{Systematic sampling:} $\phi_s$. We sample at the extremes of the interval and then at middle points. For example, for $[0,1]$, we sample $0$, $1$, $0.5$, $0.25$, $0.75$, etc. We aim to test the impact of favoring extreme values.
    \item \textbf{Uniform sampling:} $\phi_u$. Each successor has an equal probability of being sampled. We aim to test whether randomness improves solution quality and/or coverage.
    \item \textbf{Heuristic-guided sampling:} $\phi_h$. We guide sampling using the heuristic function, making it more likely to sample states with better heuristic values. The sampling probability is given by:
    \[
    P_h(s'|s) = \frac{\left(\frac{1}{h(s') + \epsilon}\right)^\beta}{\sum_{s'' \in \textup{succ}(s)} \left(\frac{1}{h(s'') + \epsilon}\right)^\beta}, \quad s' \in \textup{succ}(s)
    \]
    where $\epsilon$ avoids division by zero and $\beta>0$. The denominator is too costly to evaluate, making direct sampling computationally impractical. We will estimate $P$ by approximating the denominator using a fixed number of uniformly sampled successors, $N$, rather than using methods like Metropolis-Hastings \cite{metropolis-hastings}, which require a huge burn-in period. We aim to test the impact of favoring heuristically promising states.
\end{itemize}

\paragraph{Setup.}
We implemented prototype versions of each domain in Python using the goal-counting heuristic \( h_{GC} \), as it is independent of the transition system\footnote{Code: https://github.com/aasomol/SBFS-ControlVariables}. Heuristics like \( h_{FF} \) \cite{Hoffmann2003TheMP} or $h_{MRP}$ \cite{hmrp}, designed for numeric planning, assume finite decision spaces. The use or adaptation of informed planning heuristics for continuous spaces is beyond the scope of this work. We defined 20 problems of various sizes for each domain. We used $\beta=1$ for heuristic-guided sampling and $\alpha=0.3$ and UCB1 for MCTS with PW. All experiments were run with a fixed seed, on a 12th Gen Intel(R) Core(TM) i9-12900KF CPU and Ubuntu 22.04 LTS, for 10 minutes and 8 GB memory limit. 

\begin{table}[t]
    \centering\small
    \begin{tabular}{|c | c|c|c|c|}
         \toprule
         \multicolumn{2}{|c|}{\textbf{S-BFS (140)}} & $\phi_s$ & $\phi_u$ & $\phi_h$ \\
         \toprule \multirow{2}{*}{$r_\textup{lin}$} & \textbf{S-G} &131 (30.16) & 126 (37.69) & 95 (33.87) \\ 
         & \textbf{S-A} & 18 (71.07) & 13 (49.42) &  11 (57.14) \\
         \midrule
         \multirow{2}{*}{$r_\textup{{qua}}$} & \textbf{S-G} & 120 (25.30) & 114 (28.19) & 91 (16.77) \\ 
         & \textbf{S-A}  & 16 (59.70) & 12 (63.92) & 9 (51.99)\\
         \midrule
         \multirow{2}{*}{$r_\textup{log}$} & \textbf{S-G} & 133 (29.13) & \textbf{136 (26.53)} & 119 (28.55) \\ 
         & \textbf{S-A} & 30 (73.90) & \textbf{31 (76.46)} &28 (67.41)\\
         \bottomrule
    \end{tabular}
    \caption{Coverage and re-expansion rate of S-G and S-A with fixed seed, aggregated for every domain.}
    \label{tab:sgsa}
\end{table}

\paragraph{Analysis of S-BFS.} 
First, we aim to analyze the behavior of different algorithm instances and determine whether a dominant strategy emerges. To this end, Table \ref{tab:sgsa} presents the performance of each specific algorithm instance by domain, showing coverage and the percentage of total re-expansions in parentheses, aggregated for all domains. Results indicate that logarithmic rectification is particularly beneficial for both algorithms, leading to a significant improvement in coverage, which is expected since optimality is not the objective. Additionally, it is observed that systematic and uniform sampling functions outperform heuristic-guided sampling. Re-expansion rate is higher in S-A algorithms due to the more informed NEC. In the supplementary material we show the performance of each configuration separated by domain. To study the behaviour of S-G and S-A in depth, we select the best configuration that minimizes the number of actions and compare performance in Figure \ref{fig:savssg}. As expected, S-A achieves higher-quality solutions than S-G but generally requires more iterations to reach them, and solves much less instances. No trend is observed for time.

\begin{figure}
    \centering
    \includegraphics[width=1\linewidth]{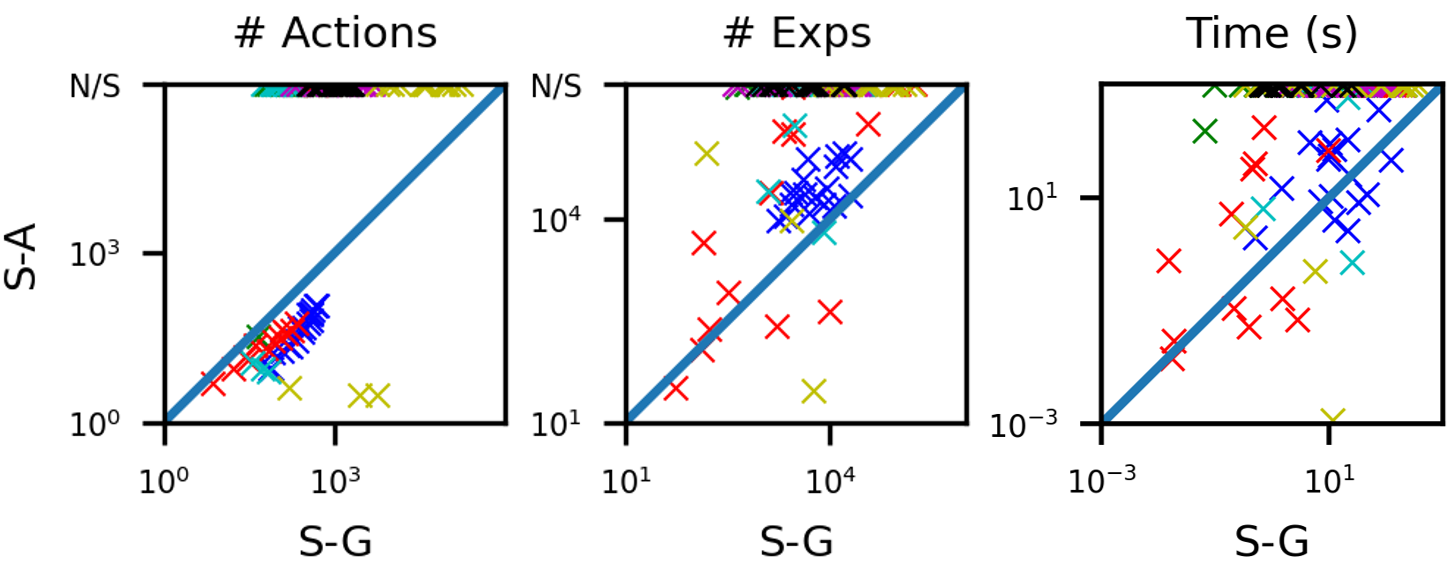}
    \caption{Comparison of S-G versus S-A in number of actions, number of expansions and time, per domain.}
    \label{fig:savssg}
\end{figure}

\paragraph{Comparison with baselines.}
We want to compare the performance of S-BFS against baselines. To do so, we first analyze the coverage of the algorithms over time. For S-G and S-A, we use the runs that minimized the number of actions. In Figure \ref{fig:survivalplot} (left), in which we depict a survival plot, i.e., coverage over time,   for every analyzed strategy, we observe that S-G successfully solves all 140 proposed problems, and S-A solves more problems than NextFLAP. Monte Carlo Tree Search solves very few problems, even when compared to NextFLAP. We will focus on NextFLAP for this reason. For the plans solved by both NextFLAP and S-BFS, it is observed in Figure \ref{fig:survivalplot} (right), in which we depict the comparison between S-BFS and NextFLAP in terms of number of actions, that the number of actions achieved by NextFLAP is lower than that of S-BFS in problems solved by both approaches. The number of iterations and the time taken by NextFLAP in comparison to S-BFS depend on the specific problem instance being solved and do not follow a clear pattern that warrants further emphasis.

\begin{figure}[t]
    \centering
    \includegraphics[width=.9\linewidth]{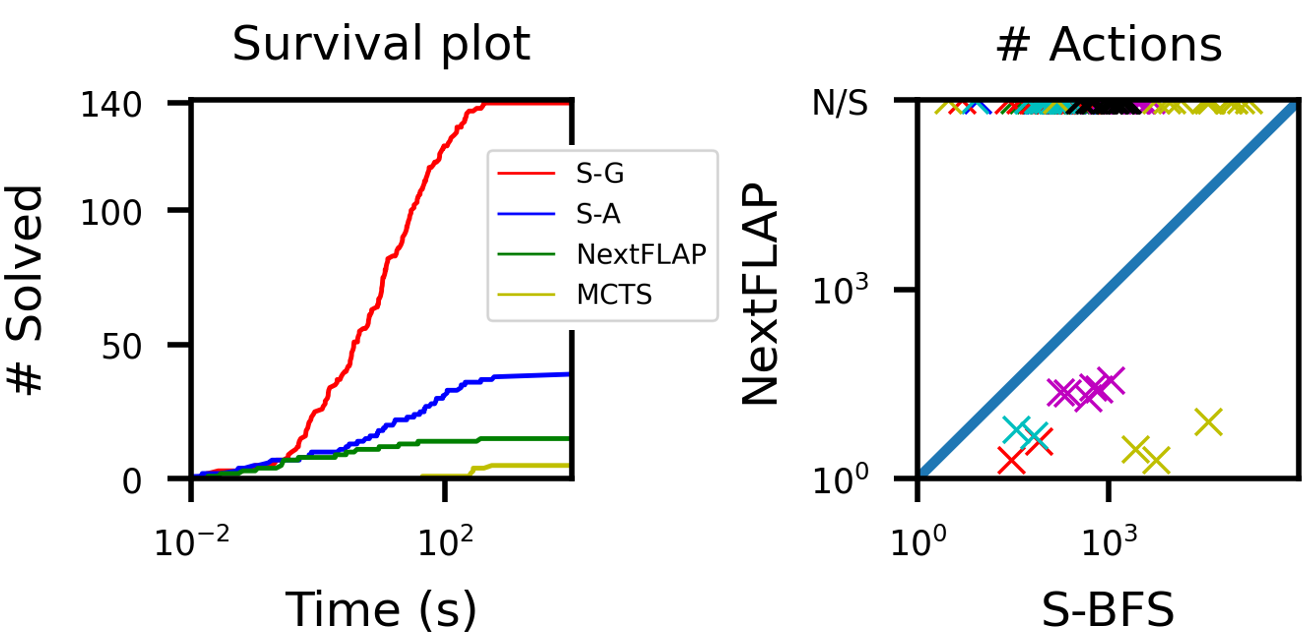}
    \caption{Survival plot of four approaches (left) and number of actions of S-BFS against NextFLAP per domain (right)}
    \label{fig:survivalplot}
\end{figure}

\paragraph{Discussion.}
In the context of the Analysis of S-BFS, we observed a clear trend toward improvement with logarithmic rectification. This suggests that the algorithm benefits from emphasizing heuristics rather than high penalties. Given that penalization is required to achieve probabilistic completeness, the use of heuristics without rectification is infeasible. Among the different forms of rectification, logarithmic growth proved to be the most effective, as it allows the heuristic to contribute most significantly to the search process.

Our results further indicate that both systematic and uniform sampling strategies yielded the best performance. Interestingly, the incorporation of the heuristic function to guide the sampling did not result in improved search efficiency. This can be attributed to the fact that the heuristic function employed exhibits numerous plateaus, causing the heuristic-guided sampling strategy to behave similarly to a uniform sampling strategy, albeit with a higher computational cost. 

On the other hand, the comparison of results with NextFLAP and MCTS demonstrates that our algorithm is able to solve a significantly higher number of instances. In the case of NextFLAP, this could be attributed to its reliance on an optimization module and the forward POP search mechanism. In contrast, for Monte-Carlo Tree Search, the advantage of our approach may stem from the inherent complexity and the progressive size scaling of the problems addressed. However, it is observed that the plans generated by NextFLAP outperform those produced by S-BFS for smaller problem instances, which could be due to the absence of optimality guarantees in our algorithm that arise from ensuring probabilistic completeness. Moreover, NextFLAP conducts a minimization of the makespan as a last planning step. However, we argue that the substantial differences in coverage compensate for this limitation.

\section{Conclusion}
In this work, we adopted a planning framework with the following characteristics: discretized time, instantaneous actions, and continuous numeric variables. We deliberately ignore durative actions in this approach because although 
we view the duration of actions as a kind of a continuous numeric parameter, we believe that the semantics associated with the use of a control parameter \textit{time} in actions would require a special formalization. 

We intend to lay the foundations for handling control parameters via search. 
For this reason, we have proposed a systematic search algorithm for solving problems with control parameters; which is based on the concept of delayed partial expansion, where a state is not fully expanded but instead incrementally expands a subset of its successors. We have demonstrated interesting properties of this algorithm that hold under certain conditions, such as probabilistic completeness and a specific notion of solution quality guarantees. Finally, we have compared our approach against other methods capable of searching in spaces defined with control parameters, showing that our algorithm outperforms existing approaches.

For future work, we aim to integrate our framework within the context of temporal planning and extend it to handle continuous-time actions as described in PDDL+.
 
Additionally, we want to study numeric planning heuristics that can take infinite decision space into account, too, for instance starting from the well studied subgoaling relaxation framework \cite{subgoaling20,kuroiwa:22:lmcut}. Overall, this work lays the foundation for a promising line of research.

\section*{Acknowledgements}

This work was partially supported by the project I+D+i AEI PID2021-127647NB-C22 funded by MICIU/AEI/10.13039/501100011033 and by FEDER/UE; and by the GENERALITAT VALENCIANA project PROMETEO CIPROM/2023/23. This work was also partially supported by the projects SERICS (PE00000014) and FAIR (B53C22003980006), both under the NRRP MUR program funded by the EU - NGEU.
Angel Aso-Mollar is partially supported by the FPU21/04273.

\bibliographystyle{named}
\bibliography{ijcai25}

\begin{thebibliography}{}

\bibitem[\protect\citeauthoryear{Aineto \bgroup \em et al.\egroup }{2022}]{Aineto2022ExplainingTB}
Diego Aineto, Eva Onaind{\'i}a, Miquel Ram{\'i}rez, Enrico Scala, and Ivan Serina.
\newblock {Explaining the Behaviour of Hybrid Systems with PDDL+ Planning}.
\newblock In {\em International Joint Conference on Artificial Intelligence}, 2022.

\bibitem[\protect\citeauthoryear{Alcázar \bgroup \em et al.\egroup }{2021}]{vidal21}
Vidal Alcázar, Manuela Veloso, and Daniel Borrajo.
\newblock {Adapting a Rapidly-Exploring Random Tree for Automated Planning}.
\newblock {\em Proceedings of the International Symposium on Combinatorial Search}, 2:2--9, 08 2021.

\bibitem[\protect\citeauthoryear{Aso-Mollar \bgroup \em et al.\egroup }{2025}]{aso-mollar2025a}
{\'A}ngel Aso-Mollar, Diego Aineto, Enrico Scala, and Eva Onaindia.
\newblock {A Sampling Approach to Planning with Infinite Domain Control Variables}.
\newblock In {\em International Conference on Automated Planning and Scheduling}, 2025.

\bibitem[\protect\citeauthoryear{Browne \bgroup \em et al.\egroup }{2012}]{Browne12}
Cameron Browne, Edward~Jack Powley, Daniel Whitehouse, Simon~M. Lucas, Peter~I. Cowling, Philipp Rohlfshagen, Stephen Tavener, Diego~Perez Liebana, Spyridon Samothrakis, and Simon Colton.
\newblock {A Survey of Monte Carlo Tree Search Methods}.
\newblock {\em {IEEE} Trans. Comput. Intell. {AI} Games}, 4(1):1--43, 2012.

\bibitem[\protect\citeauthoryear{Coles \bgroup \em et al.\egroup }{2012}]{ColesCFL2012}
A.~J. Coles, A.~I. Coles, M.~Fox, and D.~Long.
\newblock {COLIN: Planning with Continuous Linear Numeric Change}.
\newblock {\em Journal of Artificial Intelligence Research}, 44:1–96, 2012.

\bibitem[\protect\citeauthoryear{Felner \bgroup \em et al.\egroup }{2012}]{pea}
Ariel Felner, Meir Goldenberg, Guni Sharon, Roni Stern, Tal Beja, Nathan Sturtevant, Jonathan Schaeffer, and Robert Holte.
\newblock {Partial-Expansion A* with Selective Node Generation}.
\newblock {\em Proceedings of the AAAI Conference on Artificial Intelligence}, 26(1):471--477, Sep. 2012.

\bibitem[\protect\citeauthoryear{Fern{\'{a}}ndez{-}Gonz{\'{a}}lez \bgroup \em et al.\egroup }{2015}]{Fernandez-Gonzalez15}
Enrique Fern{\'{a}}ndez{-}Gonz{\'{a}}lez, Erez Karpas, and Brian~Charles Williams.
\newblock {Mixed Discrete-Continuous Heuristic Generative Planning Based on Flow Tubes}.
\newblock In {\em {International Joint Conference on Artificial Intelligence} 2015}, pages 1565--1572. {AAAI} Press, 2015.

\bibitem[\protect\citeauthoryear{Fox and Long}{2003}]{Fox_2003}
M.~Fox and D.~Long.
\newblock {PDDL2.1: An Extension to PDDL for Expressing Temporal Planning Domains}.
\newblock {\em Journal of Artificial Intelligence Research}, 20:61–124, December 2003.

\bibitem[\protect\citeauthoryear{Fox and Long}{2006}]{Fox_2006}
M.~Fox and D.~Long.
\newblock {Modelling Mixed Discrete-Continuous Domains for Planning}.
\newblock {\em Journal of Artificial Intelligence Research}, 27:235–297, October 2006.

\bibitem[\protect\citeauthoryear{Goldenberg \bgroup \em et al.\egroup }{2014}]{epea}
Meir Goldenberg, Ariel Felner, Roni Stern, Guni Sharon, Nathan Sturtevant, Robert~C. Holte, and Jonathan Schaeffer.
\newblock {Enhanced partial expansion A*}.
\newblock {\em Journal of Artificial Intelligence Research}, 50(1):141–187, May 2014.

\bibitem[\protect\citeauthoryear{Hastings}{1970}]{metropolis-hastings}
W.~K. Hastings.
\newblock {Monte Carlo Sampling Methods Using Markov Chains and their Applications}.
\newblock {\em Biometrika}, 57(1):97--109, 04 1970.

\bibitem[\protect\citeauthoryear{Heesch \bgroup \em et al.\egroup }{2024}]{Heesch2024ALA}
Ren{\'e} Heesch, Alessandro Cimatti, Jonas Ehrhardt, Alexander Diedrich, and Oliver Niggemann.
\newblock {A Lazy Approach to Neural Numerical Planning with Control Parameters}.
\newblock In {\em European Conference on Artificial Intelligence}, 2024.

\bibitem[\protect\citeauthoryear{Hoffmann}{2003}]{Hoffmann2003TheMP}
J{\"o}rg Hoffmann.
\newblock {The Metric-FF Planning System: Translating ``Ignoring Delete Lists'' to Numeric State Variables}.
\newblock {\em J. Artif. Intell. Res.}, 20:291--341, 2003.

\bibitem[\protect\citeauthoryear{Keller and Helmert}{2013}]{thts}
Thomas Keller and Malte Helmert.
\newblock {Trial-Based Heuristic Tree Search for Finite Horizon MDPs}.
\newblock In {\em International Conference on Automated Planning and Scheduling}, volume~23, pages 135--143, Jun. 2013.

\bibitem[\protect\citeauthoryear{Kocsis and Szepesv{\'{a}}ri}{2006}]{KocsisS06}
Levente Kocsis and Csaba Szepesv{\'{a}}ri.
\newblock {Bandit Based Monte-Carlo Planning}.
\newblock In Johannes F{\"{u}}rnkranz, Tobias Scheffer, and Myra Spiliopoulou, editors, {\em Machine Learning: {ECML} 2006, 17th European Conference on Machine Learning}, volume 4212 of {\em Lecture Notes in Computer Science}, pages 282--293. Springer, 2006.

\bibitem[\protect\citeauthoryear{Kuroiwa \bgroup \em et al.\egroup }{2022}]{kuroiwa:22:lmcut}
Ryo Kuroiwa, Alexander Shleyfman, Chiara Piacentini, Margarita~P. Castro, and J.~Christopher Beck.
\newblock {The LM-Cut Heuristic Family for Optimal Numeric Planning with Simple Conditions}.
\newblock {\em Journal of Artificial Intelligence Research}, 75:1477--1548, 2022.

\bibitem[\protect\citeauthoryear{LaValle}{1998}]{LaValle1998RapidlyexploringRT}
Steven~M. LaValle.
\newblock {Rapidly-Exploring Random Trees: a New Tool for Path Planning}.
\newblock {\em Computer Science Dept., Iowa State University}, 1998.

\bibitem[\protect\citeauthoryear{Li and Williams}{2008}]{konming08}
Hui Li and Brian Williams.
\newblock {Generative Planning for Hybrid Systems Based on Flow Tubes}.
\newblock In {\em {International Conference on Automated Planning and Scheduling} 2008}, pages 206--213, 01 2008.

\bibitem[\protect\citeauthoryear{Sapena \bgroup \em et al.\egroup }{2024}]{SAPENA2024122820}
Oscar Sapena, Eva Onaindia, and Eliseo Marzal.
\newblock {A Hybrid Approach for Expressive Numeric and Temporal Planning with Control Parameters}.
\newblock {\em Expert Systems with Applications}, 242:122820, 2024.

\bibitem[\protect\citeauthoryear{Savaş \bgroup \em et al.\egroup }{2016}]{SavasFLM16}
Emre Savaş, Maria Fox, Derek Long, and Daniele Magazzeni.
\newblock {Planning Using Actions with Control Parameters}.
\newblock In {\em {European Conference on Artificial Intelligence} 2016}, volume 285 of {\em Frontiers in Artificial Intelligence and Applications}, pages 1185--1193. {IOS} Press, 2016.

\bibitem[\protect\citeauthoryear{Savaş}{2018}]{Savas18}
Emre Savaş.
\newblock {\em {Temporal-numeric planning with control parameters}}.
\newblock PhD thesis, King's College London, 2018.

\bibitem[\protect\citeauthoryear{Scala \bgroup \em et al.\egroup }{2020a}]{subgoaling20}
Enrico Scala, Patrik Haslum, Sylvie Thiebaux, and Miquel Ramirez.
\newblock {Subgoaling Techniques for Satisficing and Optimal Numeric Planning}.
\newblock {\em Journal of Artificial Intelligence Research}, 68:691--752, 08 2020.

\bibitem[\protect\citeauthoryear{Scala \bgroup \em et al.\egroup }{2020b}]{hmrp}
Enrico Scala, Alessandro Saetti, Ivan Serina, and Alfonso~E. Gerevini.
\newblock {Search-Guidance Mechanisms for Numeric Planning Through Subgoaling Relaxation}.
\newblock In {\em International Conference on Automated Planning and Scheduling}, volume~30, pages 226--234, Jun. 2020.

\bibitem[\protect\citeauthoryear{Shin and Davis}{2005}]{ShinD2005}
Ji-Ae Shin and Ernest Davis.
\newblock {Processes and Continuous Change in a SAT-based Planner}.
\newblock {\em Artificial Intelligence}, 166(1):194--253, 2005.

\bibitem[\protect\citeauthoryear{Sokota \bgroup \em et al.\egroup }{2021}]{mctsPW}
Samuel Sokota, Caleb~Y Ho, Zaheen Ahmad, and J.~Zico Kolter.
\newblock {Monte Carlo Tree Search With Iteratively Refining State Abstractions}.
\newblock In M.~Ranzato, A.~Beygelzimer, Y.~Dauphin, P.S. Liang, and J.~Wortman Vaughan, editors, {\em {Neural Information Processing Systems} 2021}, volume~34, pages 18698--18709. Curran Associates, Inc., 2021.

\bibitem[\protect\citeauthoryear{Valenzano and Xie}{2016}]{Valenzano_Xie_2016}
Richard Valenzano and Fan Xie.
\newblock {On the Completeness of Best-First Search Variants That Use Random Exploration}.
\newblock {\em Proceedings of the AAAI Conference on Artificial Intelligence}, 30(1), Feb. 2016.

\bibitem[\protect\citeauthoryear{Wissow and Asai}{2024}]{masataro24}
Stephen Wissow and Masataro Asai.
\newblock {Scale-Adaptive Balancing of Exploration and Exploitation in Classical Planning}.
\newblock In {\em {27th European Conference on Artificial Intelligence}}, 2024.

\bibitem[\protect\citeauthoryear{Xu}{2024}]{rrtrecentadv}
Tong Xu.
\newblock {Recent advances in Rapidly-exploring random tree: A review}.
\newblock {\em Heliyon}, 10:e32451, 2024.

\end{thebibliography}

\end{document}